\def\eqref#1{equation~\ref{#1}}
\def\1{\bm{1}}
\DeclareMathAlphabet{\mathsfit}{\encodingdefault}{\sfdefault}{m}{sl}
\SetMathAlphabet{\mathsfit}{bold}{\encodingdefault}{\sfdefault}{bx}{n}
\newtheorem{theorem}{Theorem}
\newtheorem{lemma}[theorem]{Lemma}
\setlist{leftmargin=3mm}
\title{Controllable Data Generation Via \\ Iterative Data-Property Mutual Mappings}
\author{%
    Bo Pan\textsuperscript{1}, Muran Qin\textsuperscript{2}, Shiyu Wang\textsuperscript{1}, Yifei Zhang\textsuperscript{1}, Liang Zhao\textsuperscript{1} \\
    \textsuperscript{1}Emory University, Atlanta, USA \\ \textsuperscript{2}University of California, San Diego, San Diego, USA \\
    \{bo.pan, shiyu.wang, yifei.zhang2, liang.zhao\}@emory.edu, muqin@ucsd.edu \\
}
\begin{document}

\maketitle

\begin{abstract}
Deep generative models have been widely used for their ability to generate realistic data samples in various areas, such as images, molecules, text, and speech. One major goal of data generation is controllability, namely to generate new data with desired properties. Despite growing interest in the area of controllable generation, significant challenges still remain, including 1) disentangling desired properties with unrelated latent variables, 2) out-of-distribution property control, and 3) objective optimization for out-of-distribution property control. To address these challenges, in this paper, we propose a general framework to enhance VAE-based data generators with property controllability and ensure disentanglement. Our proposed objective can be optimized on both data seen and unseen in the training set. We propose a training procedure to train the objective in a semi-supervised manner by iteratively conducting mutual mappings between the data and properties. The proposed framework is implemented on four VAE-based controllable generators to evaluate its performance on property error, disentanglement, generation quality, and training time. The results indicate that our proposed framework enables more precise control over the properties of generated samples in a short training time, ensuring the disentanglement and keeping the validity of the generated samples.
\end{abstract}

\section{Introduction}
Deep generative models, such as Variational Autoencoders (VAEs) \cite{kingma2013auto, oussidi2018deep}, Generative Adversarial Networks (GANs) \cite{goodfellow2020generative, guo2022systematic}, normalizing flows \cite{dinh2016density, rezende2015variational}, and diffusion models \cite{ho2020denoising}, have become increasingly popular in recent years for their ability to model the underlying distribution of data, which is useful for tasks such as data generation, representation learning, and anomaly detection. Specifically, deep generative models have emerged as a powerful tool for generating realistic data samples with complex structures in various domains, such as images, molecules, and natural language.
However, in many real-world applications, it is necessary to generate data with specific properties \cite{wang2022controllable}. For example, in the field of chemistry, scientists often aim to design molecules with desired characteristics, such as high binding affinity, low toxicity, and specific reactivity. This presents a challenge as generating data with specific properties is a non-trivial task.

The challenge of generating data with specified properties is usually addressed through disentangled representation learning with generative models. This type of method aims to learn latent variables that can separate out the independent factors of variation that contribute to the observed data. With a low-dimensional bottleneck, VAE models naturally have an advantage in learning a disentangled latent space.
Particularly, several works have investigated the application of disentangled learning in the generation domain with VAE models: CSVAE \cite{csvae} transfers image attributes by correlating latent variables with desired properties; Semi-VAE \cite{semivae} pairs latent space with properties by minimizing the mean-square-error (MSE) between latent variables and desired properties; PCVAE \cite{pcvae} synthesizes image objects with desired positions and scales by enforcing the mutual dependence between disentangled latent variables and properties; CorrVAE \cite{wang2022controllable} uses an explainable mask pooling layer to precisely preserve the correlated mutual dependence between latent vectors and properties.

Although the current results on this topic have been encouraging, there are three significant challenges that are still not well solved: (1) How to ensure the precision of property control? (2) How to ensure the disentanglement between the latent variables and the properties they do not control? (3) How to ensure the above disentanglement and controllability are guaranteed in general, instead of merely in the training domain?

To address the above challenges, we propose a general framework with customizable constraints that allow it to be applied to various VAE-based controllable generators. We specify two of the constraints that can enforce the disentanglement between the property to control and the unrelated latent variables. We further propose an overall training objective of this framework that can directly minimize the property errors and also cover data both seen and unseen in the training set. To optimize this training objective effectively, we propose a training procedure that optimizes the model with both data and properties, including those outside the training set. The contributions of this paper are summarized as follows:

\begin{itemize}
\item A general controllable generation framework with enforced disentanglement. We propose a general framework with enhanced property control and disentanglement between properties of interest and unrelated latent variables. The proposed framework can be incorporated with existing VAE-based property controllable generation models.
\item An objective covering property values out of the range of the training set. We extend the traditional objective function for property controllable generation to encompass data both within and outside the distribution of the training set, enabling the adaptation of model to out-of-distribution data ranges.
\item A training procedure to effectively optimize the objective by iteratively mapping data and properties. We present a procedure for training the model with both data and properties, including those unseen in the training set, by iteratively conducting mutual mappings between data and properties. 
\item A comprehensive evaluation of the proposed framework including quantitative and qualitative results. The results demonstrate that our framework can significantly enhance the precision of property control and disentanglement with a shorter training time compared to training the base models.
\end{itemize}

\section{Related Work}

Controllable deep data generation aims to generate data with desired properties \cite{wang2022controllable}. It has various critical applications including molecule design \cite{jin2020multi}, drug discovery \cite{pan2022property}, image editing \cite{deng2020disentangled}, and speech synthesis \cite{henter2018deep, keskar2019ctrl}. 
With the disentangled representation learning ability of VAEs, a lot of disentangled VAE-based generators with controllability have been proposed, forming an important stream of general controllable generation methods. CondVAE \cite{condvae1, condvae2} learns a structured latent space by conditioning the latent representation on the property. CSVAE \cite{csvae}, Semi-VAE \cite{semivae}, and PCVAE \cite{pcvae} learn mutual dependence between properties and latent variables and manipulate the value of latent variables to control the generation process. The most recent CorrVAE \cite{wang2022multi} handles the correlation of desired properties 
during controllable generation. It recovers semantics and the correlation of properties through disentangled latent vectors learned via an explainable mask pooling layer.

\section{Problem Formulation} \label{pf}

Given a dataset $\mathcal{D}=\{\mathcal{X}, \mathcal{Y}\}$ consisting of elements $(x, y)$, where $x\in \mathbb{R}^n$ and $y= \{y_k \in \mathbb{R}\}^K_{k=1}$ representing $K$ properties of interest and $y=\mathbf{f}(x)$. The task of controllable data generation is to learn a generative model that can generate $x$ with the desired property value $y$. 

This problem is usually solved with variation Bayesian-based methods \cite{condvae1,condvae2,semivae,csvae,pcvae,wang2022multi}. These methods share a pattern that relates a group of latent variables to the properties to control. Specifically, they split the latent variables of generative model into two groups, $z$ and $w$, where the variable $w$ controls the properties of interest labeled by $y$, and $z$ controls all other aspects of $x$. During the controllable generation phase, they sample $z$ from the prior distribution, and $w$ is attained with $w=\mathbf m(y,z)$ with desired property $y$ and sampled $z$, where $\mathbf m$ is a mapping from desired properties to $w$.\footnote{Notably, some of the work \cite{semivae, pcvae} does not require $z$ to calculate $w$, namely $w=\mathbf m(y)$. Here we write it as a form that generally for different methods, $w=\mathbf m(y,z)$.} Then the data can be generated with the generative model using the latent factors $z$ and $w$.

Although the current results on this controllable generation problem have been encouraging, there are three significant challenges that are still not well-solved, including:

\textbf{Challenge 1}: \textbf{Difficulties in precisely controlling the properties of generated data.} Traditional methods ensure property control by maximizing the mutual information between a group of latent variables and the properties and minimizing the mutual information between this group of latent variables and the others. This yields an inaccuracy in property control since the bias of these two processes combine to affect the performance. 

\textbf{Challenge 2}: \textbf{Difficulties in ensuring the independence between properties to control and latent variables unrelated to these properties.} It is vital to maintain this independence during generation to accurately control specific properties while ensuring other aspects of the generated data remain unaffected. However, current methods mostly enforce independence between two groups of latent variables that are related and unrelated to the properties, which is an indirect way since it focuses on the latent factors related to the properties instead of the property value itself. For example, CSVAE \cite{csvae} minimizes the conditional entropy between two sets of latent variables (those related and unrelated to controlled properties), while PCVAE \cite{pcvae} employs the total correlation term to encourage independence between these variable groups. Directly ensuring independence between property values and latent variables remains a significant challenge.

\textbf{Challenge 3}:\textbf{ Difficulties in controlling the properties in out-of-distribution ranges. }Deep models are limited when the desired property values are out of the distribution in the dataset due to the scarcity of training data. This makes it difficult to generate data with desired property values in out-of-distribution ranges.

\textbf{The goal of this paper:} We aim to propose a general framework that can enhance the current controllable data generation techniques to better overcome the above three challenges, which are detailed in the following sections.
\section{Methodologies}

In Section~\ref{sec:gf}, we propose a generic framework that covers various VAE-based controllable data generators. Then we develop the learning objective for enhancing the controllability and disentanglement of our framework inside and outside the training domain in Section~\ref{sec:ood}. Finally, we develop a new algorithm that solves the learning objective by closing the loop of data-to-property and property-to-data mappings in the last subsection.
\subsection{A General Framework for VAE-Based Property Controllable Data Generation}
\label{sec:gf}

In this section, we propose a generic framework that can enhance existing controllable data generators, which are typically variational Bayesian techniques, as special cases. Our framework has two key components: (1) \textbf{An enhanced shared backbone}: We enhance the backbone shared by different existing approaches by a) a training objective that directly minimizes the property error, and b) an important independence assumption required by controllable data generation but overlooked by some of the existing works, which allows a better disentanglement between the desired properties $y$ and unrelated latent variables $z$. (2) \textbf{A customizable module for adapting unshared assumptions:} Different variational Bayesian-based controllable generators are typically differentiated by their respective assumptions used to derive their learning objective for model inference. We propose a module to fit the different assumptions of these existing models. We introduce them respectively in the following.

\noindent 1) \textbf{An enhanced shared backbone}.
Given an approximate posterior $q_{\phi}({z}, {w} |{x}, {y})$, we start with Jensen's inequality to obtain the variational lower bound
\begin{equation} \label{var_lower_bound}
\begin{aligned}
 \log p_{\theta, \gamma}({x}, {y})
&=\log \mathbb{E}_{q_\phi({z}, {w}|{x}, {y})}\left[p_{\theta, \gamma}({x}, {y}, {w}, {z}) / q_\phi({z}, {w}|{x}, {y})\right] \\
&\geq  \mathbb{E}_{q_\phi({z}, {w} |{x}, {y})}\left[\log p_{\theta, \gamma}({x}, {y}, {w}, {z}) / q_\phi({z}, {w}| {x}, {y})\right]
\end{aligned}
\end{equation}

\begin{wrapfigure}[8]{r}{0.27\textwidth}  
\vspace{-9mm}
\begin{center}
        \includegraphics[width=0.16\textwidth]{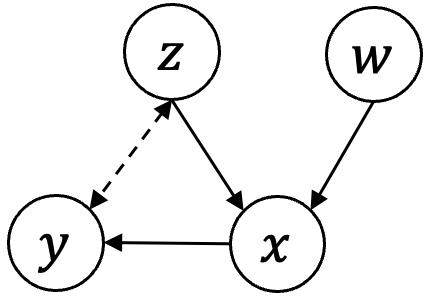}
    \end{center}\vspace{-0.4cm}
    \caption{Graphical model of the generative process. The dotted line indicates enforced independence. \vspace{-0.4cm}}
    \label{fig:graphical}
\end{wrapfigure}

To accomplish the aforementioned goal of controllable data generation, we employ the assumption of $z \perp y$: the latent variable $z$ is assumed to be independent of the properties of generated data ($y$). In other words, $z$ controls aspects of generated data other than the properties of interest. With this assumption, the graphical model can be shown as Fig.~\ref{fig:graphical}. The joint log-likelihood in Equation \ref{var_lower_bound} can be then derived as:
\begin{equation} \label{joint_log_likelihood}
\begin{aligned}
    \log p_{\theta, \gamma}({x}, {y}, {w}, {z})
    &=\log p_{\theta, \gamma}(x, y|w, z)+\log p(w, z) \\
    &=\log p(y|x, w, z) + \log p_{\theta}(x|w, z)+\log p(w, z) \\
    &=\log p_{f}(y|x) + \log p_{\theta}(x|w, z)+\log p(w, z) \qquad \text{(due to D-separation)}
\end{aligned}
\end{equation} 
where $p_\theta$ is the conditional generative process parameterized by $\theta$, and $p_f$ is the conditional distribution of measuring $y$ with $x$, parameterized by $\mathbf f(\cdot)$. This joint log-likelihood is universal across various VAE-based controllable generators, while enhanced with an important assumption $y\perp z$.

2) \textbf{A customizable module for adapting unshared assumptions.} By incorporating the joint log-likelihood defined in Eq.~\ref{joint_log_likelihood} into the variational lower bound term in Eq.~\ref{var_lower_bound}, we obtain the negative part as an upper bound on $-\log p_{\theta, \gamma}(x, y)$, which serves as the overall objective for our framework. Specifically, different variational Bayesian models are differentiated from each other by the assumptions they use. To make our framework easily adapt to their particular assumptions, we propose a  module $\mathbf C_i(z,w,x,y),  \ i=0, 1,2,...,N_c$ to entail the additional assumption(s) as constraint(s) in the objective function. Then the overall objective can be written as
\begin{equation} \label{loss_0}
\begin{aligned}
    \text{Minimize}\quad 
    &-\mathbb{E}_{q_\phi(z, w|x, y)}[\log p_{\theta}(x|z, w)]\\
    &-\mathbb{E}_{q_\phi(z, w|x, y)}[\log p(y|x)]\\
    &+D_{K L}(q_\phi(z, w|x, y) \| p(z, w)) \\
    \text{Subject to}\quad  &\ \mathbf C_i(z,w,x,y)=0,  \quad\forall i\in\{0, 1,2,...,N_c\}       
\end{aligned}
\end{equation}
where $D_{KL}$ denotes the Kullback–Leibler divergence. Each constraint $C_i(z,w,x,y)$ can be a variable that can be customized to different assumptions of the specific generators, and $N_c$ is the total number of constraints. With this framework, our assumption $y\perp z$ can be enforced as specified constraints to the objective in Equation \ref{loss_0} as: 
\begin{equation} \label{constraints}
\begin{aligned}
\mathbf C_0(z,w,x,y)=p(y,z)-p(y)p(z)=0
\end{aligned}
\end{equation}
which represents the independence of the variables $y$ and $z$. So that the first constraint is specified in our framework, and the remaining are left to be customizable to different base generators.

In Appx.~\ref{proof:base}, we give examples to show that various existing variational Bayesian approaches for controllable data generation can be incorporated with our framework by adding customized constraints $C_i(z,w,x,y)=0$.

\begin{figure}[t]  
\centering  
\includegraphics[width=0.7\textwidth,trim=140 130 200 300,clip]{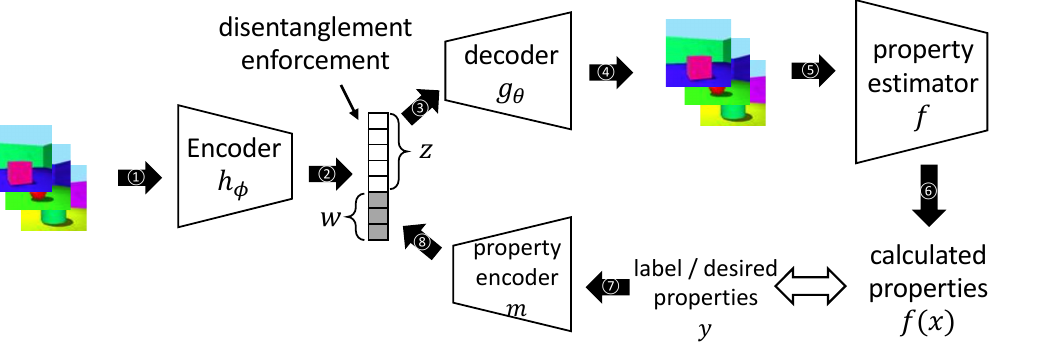}  \caption{Illustration of our proposed framework. When optimized with data in the training set, the parts \textcircled{1}\textcircled{2}\textcircled{7}\textcircled{8}\textcircled{3} run with pairs of $(x,y)$, which is the same as base VAE generators with a encoder, decoder and property encoder, and here $y$ denotes the label of $x$. When optimized with data out of the training set, $y$ is picked as the desired property value, which is used to infer $w$; then the generated data is decoded and measured with $f$ to get the actual property $f(x)$, which is used to calculate a property loss with $y$, thus all \textcircled{1} to \textcircled{8} run. \protect\footnotemark} 
\label{modelfig}
\end{figure}
\textbf{Model Architecture.} The objective above requires three components that should be modeled with neural networks: 
\begin{itemize}
    \item Encoder \(\mathbf h_\phi\), to model \(q_\phi(z,w|x,y)\) by $(z,w)=\mathbf h_\phi(x,y)$.
    \item Decoder \(\mathbf g_\theta\), to model \(p_\theta(x|z,w)\) by $x=\mathbf g_\theta(z,w)$.
    \item Property encoder \(\mathbf m_\gamma\), to map desired $y$ \footnote{$z$ is optional depending on the specific base generators. This is explained in detail in Sec.~\ref{pf}.} to $w$ by $\mathbf m_\gamma(y,z)$.
\end{itemize}

The implementation of the encoder and decoder can vary with the data type of \(x\), such as images and graphs. These model components can also be viewed from the illustration of our training procedure (presented in the next section) in Fig.~\ref{modelfig}.

\subsection{Out-of-Distribution Property Control}
\label{sec:ood}
\textbf{Overview.} We aim to address situations where the desired property values for generating new data are out of the range of the training dataset. For instance, in the field of chemistry, one might aim to produce a new molecule with a higher molecular mass that shares similar chemical patterns with molecules from the training set with lower masses.
In this section, we extend our framework to accommodate this out-of-distribution generation scenario by extending the objective to both seen and unseen data. The derivation is organized by extending the three terms of  Eq.~\ref{loss_0} to get $\mathcal{L}_1$, $\mathcal{L}_2$, and $\mathcal{L}_3$, correspondingly.

\textbf{Notations.} We use the notations $\{\mathcal{X}_1, \mathcal{Y}_1\}$ to denote the data and labels in the dataset $\mathcal{D}$, and $\{\mathcal{X}_2, \mathcal{Y}_2\}$ to denote the data that are not present in $\mathcal{D}$. In the following derivations, we decompose each term of Eq.~\ref{loss_0} to induce the overall objective that can be optimized in both $\{\mathcal{X}_1, \mathcal{Y}_1\}$ and $\{\mathcal{X}_2, \mathcal{Y}_2\}$ domains.

\textbf{Derivation of $\mathcal{L}_1$}: For the first term of Eq. \ref{loss_0}, we use a Gaussian distribution with mean of $\mathbf g_\theta(z, w)$ and standard deviation of $\sigma_p$ to approximate the conditional distribution $p_\theta(x|z,w)$, i.e. $ p_\theta(x|z,w)\rightarrow \mathcal{N}(x|\mathbf g_\theta(z, w),\sigma^2_p)$, then we have
\begin{equation} \label{l1_0}
\begin{aligned}
\underset{\theta, \phi}{\mathrm{argmin}}-\mathbb{E}_{q_\phi(z, w|x, y)}[\log p_\theta(x|z,w)] 
=\ & \underset{\theta, \phi}{\mathrm{argmin}}-\mathbb{E}_{q_\phi(z, w|x, y)}-\log \mathcal{N}(x|\mathbf {g_\theta}(z, w),\sigma^2_p) \\
=\ & \underset{\theta, \phi}{\mathrm{argmin}}-\mathbb{E}_{q_\phi(z, w|x, y)}-(x-\mathbf {g_\theta}(z, w))^2
\end{aligned}
\end{equation}
where we use $\mathbf g_\theta(z, w)$ to denote the function that models the conditional distribution $p_\theta(x|z,w)$, such that of $\hat x=\mathbf g_\theta(z, w)$. 

In the above equation, the latent variables $z,w$ are inferred from $x,y$ with $q_\phi(z,w|x,y)$. Here we use $\mathbf h_\phi(x,y)$ to denote a function that models the distribution $q_\phi(z,w|x,y)$ so that $(z, w)=\mathbf h_\phi(x,y)$. 
We employ a summative representation of Eq.~\ref{l1_0} across all data points, which constitutes our first loss term as 
\begin{equation} \label{l1_1}
\begin{aligned}
\mathcal{L}_1&= \sum_{x,y\in\mathcal{X}_1\cup\mathcal{X}_2,\mathcal{Y}_1\cup \mathcal{Y}_2} (x-\mathbf g_\theta(\mathbf h_\phi( x, y)))^2 
\end{aligned}
\end{equation}
Penalizing this term encourages that the data can be recovered from the latent factors extracted from it. 

\textbf{Derivation of $\mathcal{L}_2$}: For the second term in Eq.~\ref{loss_0}, i.e., $-\mathbb{E}_{q_\phi(z, w|x, y)}[\log p_{}(y|x)]$, the distribution $p(y|x)$ is given by the applying the law to calculate the actual property values of the generated data, i.e. $\mathbf f(x)$.
We approximate the conditional distribution  
$p(y|x)$ as a Gaussian distribution with mean $y$ and standard deviation $\sigma$. We aim to optimize the mean to actual property $\mathbf f(x)$  to ensure precise property control. So we derive this term as:
\begin{equation} \label{l2_0}
\begin{aligned}
\underset{\theta, \phi, \gamma}{\mathrm{argmin}}-\mathbb{E}_{q_\phi(z, w|x, y)}\log p(y|x)&= \underset{\theta, \phi, \gamma}{\mathrm{argmin}}-\mathbb{E}_{q_\phi(z, w|x, y)}-\log{\mathcal{N}(y|\mathbf f(x),\sigma^2)} \\
&=\underset{\theta, \phi, \gamma}{\mathrm{argmin}}\ \mathbb{E}_{q_\phi(z, w|x, y)}(y-\mathbf f(x))^2 \\
\end{aligned}
\end{equation}

where $\mathcal{N}$ denotes Gaussian distribution. We also employ a summation form across all seen and unseen data of Eq.~\ref{l2_0} to be our second loss term as:
\begin{equation}\label{l2_1}
\begin{aligned}
\mathcal{L}_2&=
\sum_{x,y\in\mathcal{X}_1\cup \mathcal{X}_2,\mathcal{Y}_1\cup \mathcal{Y}_2}(y-\mathbf f( x))^2 \\
&=\sum_{x,y\in\mathcal{X}_1,\mathcal{Y}_1}(y-\mathbf f(\mathbf g_\theta(\mathbf h_\phi(x,y)))^2+\sum_{x,y\in\mathcal{X}_2,\mathcal{Y}_2}(y-\mathbf f(x))^2
\end{aligned}
\end{equation}
This term aims to encourage the calculated property value of generated data, $\mathbf f(x)$, to be close to the given value $y$.

\textbf{Derivation of $\mathcal{L}_3$}: We use the third term in Eq. \ref{loss_0} as another part of our objective
\begin{equation} 
\begin{aligned}
\mathcal{L}_3&=D_{K L}(q_\phi(z, w|x, y) \| p(z, w))
\end{aligned}
\end{equation}
where $D_{KL}$ denotes the Kullback–Leibler divergence.

\textbf{Overall Objective.} Then the objective over all seen and unseen data can be represented as:
\begin{equation}\label{overallobjective}
\begin{aligned}
\text{Minimize}\quad\quad\quad \quad\quad \quad  \mathcal{L}_1+\alpha\mathcal{L}_2&+\beta\mathcal{L}_3 \\
\text{Subject to}\quad  \ \mathbf C_i(z,w,x,y)=0, \quad &\forall x,y\in\mathcal{X}_1\cup \mathcal{X}_2,\mathcal{Y}_1\cup \mathcal{Y}_2 \\
&\forall i\in\{0, 1,2,...,N_c\} 
\end{aligned}
\end{equation}  
 where $\alpha$ and $\beta$ are weights of different terms, and one of the constraints, $\mathbf C_0$, has been specified in Eq. \ref{constraints} as $\mathbf C_0(z,w,x,y)=p(y,z)-p(y)p(z)$.

\subsection{Training Procedure}

\textbf{Overview.} The above training objective is hard to optimize directly due to several reasons: (1) The data in $\mathcal{X}_2, \mathcal{Y}_2$ domain is unseen, thus the corresponding terms cannot be trained; (2) The constraint $y\perp z$ is hard to be directly enforced during training; (3) An effective strategy to organize the training is required. In this section, we will solve these three problems by sampling additional data, converting the constraint to a loss term, and proposing a training strategy to effectively optimize the objective.

Therefore, we have devised a training procedure to optimize the model by generating the $x$ that corresponds to the desired property $y$. 

\textbf{1) Training with unseen data.} The power of controllable VAE models allows us to generate additional data with desired properties for training. Thus we can use the decoder to generate additional data by sampling the property values in our preferred distribution and sampling the unrelated latent variables for data diversity. Specifically, the additional data $x \in \mathcal{X}_2$ is generated by $x=\mathbf{g}_\theta(z,w)$, where $z$ is sampled by $z\sim p(z)$, and $w$ is attained by $w=\mathbf{m}_\gamma(y,z)$, and $y$ is sampled by $y\sim p_2(y)$, where $p_2(y)$ is the user-preferred distribution in sampling new data. $p_2(y)$ can be out of the distribution of $y$ in $\mathcal{Y}_1$ domain, allowing the out-of-distribution training. Then a loss term in the form of a summation on the $\mathcal{X}_2, \mathcal{Y}_2$ can be converted as:
\begin{equation}
    \sum_{x,y\in\mathcal{X}_2,\mathcal{Y}_2}\ell(x,y)=n_{sample}\cdot \mathbb{E}_{p_2(y)}\mathbb{E}_{p(z)}\ell(\mathbf{g}_\theta(z,\mathbf{m}_\gamma(y,z)),y)
\end{equation}
where $\ell$ denoted any loss function calculated with $x$ and $y$, $n_{sample}$ is the total number of generated new data samples.
Converting terms summed on $\mathcal{X}_2, \mathcal{Y}_2$ domain in Eq.~\ref{l1_1} and Eq.~\ref{l2_1} with the above rule, it becomes possible for the objective to be trained on $\mathcal{X}_2, \mathcal{Y}_2$.

\textbf{2) Training with the constraint $y\perp z$.} Here we aim to convert the constraint on the loss function to a loss term using the KKT condition.
 \begin{lemma} \label{lemma:variance}
The constraint in Eq. \ref{constraints} is equivalent to
$
    \text{Var}_z(\mathbf{f}(x)|z, y)=0, \  \forall y, 
    \text{Var}_y([\mathbf e_\phi(x)]_z|z, y)=0, \ \forall z
$.
\end{lemma}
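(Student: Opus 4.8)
The plan is to unfold the single distributional constraint $\mathbf C_0 = p(y,z)-p(y)p(z)=0$ into its two equivalent one‑sided forms and match each to one of the two vanishing‑variance conditions, using the elementary fact that a conditional variance is zero exactly when the conditioned quantity is almost surely a deterministic function of the conditioning variables. Concretely, $y\perp z$ holds iff both $p(y\mid z)=p(y)$ for a.e.\ $z$ and $p(z\mid y)=p(z)$ for a.e.\ $y$; I would establish the lemma by showing that each of these two statements is equivalent to one of the two displayed variances being zero for all $y$ (resp.\ all $z$).

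For $p(y\mid z)=p(y)$ I would argue on the generative side of the model: with the target property $y$ fixed and $z$ drawn from the prior, the generated datum is $x=\mathbf g_\theta\!\big(z,\mathbf m_\gamma(y,z)\big)$ and its realized property is $\mathbf f(x)$, so the law of $y$ given $z$ is carried by $\mathbf f(x)$ as $z$ varies; hence $p(y\mid z)$ is independent of $z$ precisely when $\mathbf f(x)$ does not move with $z$ at fixed $y$, i.e.\ $\Var_z\big(\mathbf f(x)\mid z,y\big)=0$ for every $y$. For $p(z\mid y)=p(z)$ I would argue on the inference side: $z$ is read off a datum through $[\mathbf e_\phi(x)]_z$, and $p(z\mid y)$ is the pushforward of $p(x\mid y)$ under this map, so it is independent of $y$ precisely when the $z$‑code is (conditionally) insensitive to $y$, i.e.\ $\Var_y\big([\mathbf e_\phi(x)]_z\mid z,y\big)=0$ for every $z$. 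In both directions the ``$\forall y$''/``$\forall z$'' quantifier is handled by noting that a nonnegative quantity whose average over $y$ (resp.\ $z$) vanishes must vanish pointwise a.e., which is also what lets the two conditions be imposed as the single expectation‑form penalties used later in the training objective.

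Putting the two implications together in each direction yields ``$y\perp z$ $\Longleftrightarrow$ both variances vanish,'' which is the claim. The step I expect to be the real obstacle is the upgrade from ``the conditional first/second moment is constant'' to ``the full conditional distribution is constant'': a single scalar variance controls only second‑order dependence, so to reach genuine independence I would either read $\mathbf f$ and $[\mathbf e_\phi(\cdot)]_z$ as the full vector‑valued maps (so that every coordinate's — and, by a standard argument, every bounded test function's — conditional variance is driven to zero), or invoke a mild regularity/identifiability assumption on $\mathbf f$ and on the decoder $\mathbf g_\theta$ ensuring that agreement of $\mathbf f$‑values forces agreement of the relevant conditional laws; I would state this assumption explicitly. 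A secondary point to verify is the consistency of the two descriptions of $p(y,z)$ — the decoder‑side one behind condition 1 and the encoder‑side one behind condition 2 — which is exactly what the reconstruction and property terms $\mathcal L_1,\mathcal L_2$ pin down at an optimum, so the equivalence is most cleanly phrased for models that are feasible for those terms.
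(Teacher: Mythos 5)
Your proposal reaches the right conclusion but by a genuinely different reading of the vanishing-variance conditions than the paper's own proof. The paper treats $\mathrm{Var}_z(\hat y\mid z,y)$ as the variance over $z$ of the conditional density $p(\hat y\mid z,y)$ itself --- it expands it as $\mathbb{E}_z\bigl(p(\hat y\mid z,y)-\mathbb{E}_z\,p(\hat y\mid z,y)\bigr)^2$ --- so that vanishing variance literally says the conditional law is constant in $z$, after which independence follows by averaging out $y$; the second condition is dispatched symmetrically as the separate statement $y\perp\hat z$ for the encoded latent. You instead read the two conditions as variances of the realized quantities $\mathbf f(x)$ and $[\mathbf e_\phi(x)]_z$ (which is what the penalty $\mathcal L_4$ actually computes) and decompose $y\perp z$ into its two one-sided conditional forms. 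Two observations on the comparison. First, the obstacle you flag --- that a vanishing scalar conditional variance controls only second moments, not the full conditional law --- is precisely the gap the paper's density-valued reading of $\mathrm{Var}$ sidesteps, at the price of proving a statement about a different object than the one penalized in training; in the present setting your version closes without any added assumption, because $x=\mathbf g_\theta(z,\mathbf m_\gamma(y,z))$ makes $\mathbf f(x)$ a deterministic function of $(y,z)$, so zero variance over $z$ at fixed $y$ forces $\mathbf f(x)$ to be a function of $y$ alone, which is full independence from $z$ rather than mere decorrelation (your caveat only becomes necessary for a stochastic decoder). Second, each of your one-sided conditions $p(y\mid z)=p(y)$ and $p(z\mid y)=p(z)$ is individually equivalent to $p(y,z)=p(y)p(z)$, so the lemma needs both variance conditions not as complementary halves of one equivalence but because they operationalize independence through two different maps (the generative-side $\mathbf f\circ\mathbf g_\theta$ and the inference-side $\mathbf e_\phi$), which is also how the paper frames them; your closing remark that $\mathcal L_1$ and $\mathcal L_2$ are what reconcile these two descriptions of $p(y,z)$ makes explicit a consistency requirement the paper leaves implicit.
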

\begin{proof} [Proof of Lemma \ref{lemma:variance}]
See Appendix. \ref{proof:var}.
\end{proof}
As stated in Lemma \ref{lemma:variance}, the constraint in Eq.~\ref{constraints} is equivalent to $\text{Var}_z(\mathbf{f}(x)|z, y)=0, \  \forall y, 
\text{Var}_y([\mathbf e_\phi(x)]_z|z, y)=0, \ \forall z$. During the optimization process, we employ the Karush-Kuhn-Tucker (KKT) conditions to incorporate this constraint as an additional penalty term of the objective function. Then we write the summation form as another loss term as:
\begin{equation*}\label{penalty_term}
\begin{aligned}
\mathcal{L}_4=&\sum_{x,y\in\mathcal{X}_2,\mathcal{Y}_2} \text{Var}_z(\mathbf{f}(x)|z, y) +  
\text{Var}_y([\mathbf e_\phi(x)]_z|z, y) \\
=&n_{sample}\cdot \mathbb{E}_{p_2(y)}\mathbb{E}_{p(z)}\text{Var}_z(\mathbf{f}(x)|z, y)+\text{Var}_y([\mathbf e_\phi(x)]_z|z, y)\\
=&n_{sample}\cdot \mathbb{E}_{p_2(y)}\mathbb{E}_{p(z)}\text{Var}_z(\mathbf{f}(\mathbf{g}_\theta(z,\mathbf{m}_\gamma(y,z)),y)|z, y)+\text{Var}_y([\mathbf e_\phi(\mathbf{g}_\theta(z,\mathbf{m}_\gamma(y,z)),y)]_z|z, y)
\end{aligned}
\end{equation*}  
where $[\cdot]_z$ denotes we extract $z$ from $[z,w]$.
Then the overall objective can be expressed as:
\begin{equation}\label{overallobjective_training}
\begin{aligned}
&\mathcal{L}=\mathcal{L}_1+\alpha\mathcal{L}_2+\beta\mathcal{L}_3+\xi \mathcal{L}_4 \\
 s.t.,\  &\mathbf C_i(z, w, x, y)=0,  \quad \forall \ i\in\{1,...,N_c\}
\end{aligned}
\end{equation} 
Here $\mathbf C_0$ is not considered because it has been enforced with $\mathcal{L}_4$.

\textbf{3) Optimization Strategy.} We devised a strategy to optimize our objective with both seen and unseen data. Our proposed strategy has three high-level logics: (a) Directly enforcing property control by minimizing the error between desired properties and calculated properties of generated data. (b) Directly enforcing the disentangle term $\mathcal{L}_4$ by minimizing the variance of calculated $y$ with a group of data generated with the same desired $y$ and different $z$; and similarly, minimizing the encoded $z$ of a group of generated data with same $z$ and different $y$. (c) All the generated data can be labeled with its true $y$, then can be used in the following training, yielding an infinite training dataset. An algorithm of our proposed optimization algorithm is shown in Alg.~\ref{alg}. The training procedure of our proposed framework is also illustrated in Fig.~\ref{modelfig}.

\section{Experiments}
\subsection{Experiment Setting}
\textbf{Datasets.} The proposed framework and comparison base models are evaluated on three datasets from two domains, including image datasets and molecule datasets: Dsprites, 3DShapes, and QM9. The details can be found in Appx.~\ref{append:dataset}.

\textbf{Base Generators and Model Backbones.} We selected four models, CondVAE \cite{condvae1, condvae2}, CSVAE \cite{csvae}, Semi-VAE \cite{semivae}, and PCVAE \cite{pcvae} as the base generators for our framework. For experiments on image datasets, we employed Convolutional Neural Networks (CNNs) as the encoder and decoder. For molecule data, we employed GF-VAE \cite{ma2021gf} as the VAE backbone. 
\subsection{Quantitative Evaluation}
\textbf{Quantitative evaluation of property control.} The error of property values of the generated samples is directly calculated with the desired value $y$ and the actual value of generated data $f(x)$. The Mean Squared Error (MSE) is employed as the error metric. Results on dSprites, Shapes3D, and QM9 datasets are shown in Table \ref{result-dsprites}, Appx.~\ref{append:3d}, and Appx.~\ref{append:qm9}, correspondingly. Our model significantly improved property control for both in-distribution and out-of-distribution data, with noteworthy reductions in the MSE by over 90\% on dSprites and 60\% on Shapes3D. QM9 dataset improvements were lesser due to challenges in controlling the large latent space of the VAE backbone used for molecule data. Regardless, our model consistently surpassed all baselines in generating samples with properties closely matching the desired ones.

\textbf{Quantitative evaluation of disentanglement ensuring.}
Following the methodology proposed by $\beta$-VAE \cite{higgins2016beta}, we evaluate the disentanglement ensuring by conducting interventions on the factors of variation and predicting which factor was manipulated. We computes the variance of generated property values ($y$) corresponding to unchanged latent variables ($z$) (\textit{disetg1}), during the intervention as the first disentanglement score and the variance of $z$ encoded from samples generated from different desired $y$ as the second disentanglement score (\textit{disetg2}). 

\begin{minipage}{0.48\textwidth}
Results in the same tables as property control show that our model significantly improves the disentanglement between latent space $z$ and the properties $y$ on all datasets. The results demonstrate the effectiveness of our proposed approach of ensuring the disentanglement by directly minimizing the variances, compared to the methods of minimizing the KL-divergence of the base generators.

\textbf{Quantitative evaluation of generation quality.} Following previous work \cite{pcvae, wang2022multi}, for experiments on image datasets, we quantitatively evaluate the quality of generated images using the reconstruction error and negative log-likelihood; and we use three common metrics,  validity, novelty, and uniqueness, to evaluate the quality of generated molecules. 

In our experiments, the balance between the $\mathcal{L}_1$ and $\mathcal{L}_2$ was observed to form a trade-off between generation quality and property controllability. Results indicate that the model with our framework achieved better overall performance when considering the trade-off on generation quality and controllable generation compared to the baseline model. Results of experiments conducted on the QM9 datasets are presented in Appx.~\ref{generation-quality-molecule-qm9}. These results demonstrate that our proposed model can achieve comparable performance in generation quality compared to the base generators. We create a 2D figure to illustrate this trade-off in Appx.~\ref{append:3d},
based on experiments in the Shapes3D dataset. 

\end{minipage}
\hspace{0.04\textwidth}
\begin{minipage}{0.48\textwidth}
\vspace{-8mm}
\small
\begin{algorithm}[H]
   \caption{Optimizing via data-property mutual mappings}
   \label{alg}
   \begin{flushleft}
     {\bfseries Input:} $\mathcal{D}$,  $\phi$, $\theta$, $\gamma$, $p(z)$, $p_2(y)$, $f$ total training iterations $N$, training steps each iteration for seen and unseen data $N_1$, $N_2$,  learning rate $\eta$,  \\
     {\bfseries Output:} Optimized model parameters $\phi, \theta, \gamma$
   \end{flushleft}
\begin{algorithmic}[1]
   \FOR{$t=1$ {\bfseries to} $N$}
       \STATE $\nabla_\phi\gets 0$
       \STATE $\nabla_\theta\gets 0$
       \STATE $\nabla_\gamma\gets 0$ 
       
       \FOR{$n=1$ {\bfseries to} $N_1$} 
           \STATE pick $x, y$ from $\mathcal{D}$ 
           \STATE Calculate $\mathcal{L}_{(\mathcal X_1,\mathcal Y_1)}$ with $x, y$
           \STATE $\nabla_\phi \gets \nabla_\phi+ \nabla_\phi\mathcal{L}_{(\mathcal X_1,\mathcal Y_1)}$ 
           \STATE $\nabla_\theta \gets \nabla_\theta+ \nabla_\theta\mathcal{L}_{(\mathcal X_1,\mathcal Y_1)}$ 
           \STATE $\nabla_\gamma \gets \nabla_\gamma + \nabla_\gamma\mathcal{L}_{(\mathcal X_1,\mathcal Y_1)}$ 
           
       \ENDFOR
       \FOR{$n=1$ {\bfseries to} $N_2$}
            \STATE sample $y,z$ from $p_2(y)$ and $ p(z)$
            \STATE Calculate $\mathcal{L}_{(\mathcal X_2,\mathcal Y_2)}$ with $y, z$
            \STATE $\nabla_\phi \gets \nabla_\phi+ \nabla_\phi\mathcal{L}_{(\mathcal X_2,\mathcal Y_2)}$ 
           \STATE $\nabla_\theta \gets \nabla_\theta+ \nabla_\theta\mathcal{L}_{(\mathcal X_2,\mathcal Y_2)}$ 
           \STATE $\nabla_\gamma \gets \nabla_\gamma + \nabla_\gamma\mathcal{L}_{(\mathcal X_2,\mathcal Y_2)}$ 
           \STATE $x\gets \mathbf g_\theta(z, \mathbf m_\gamma(y,z))$
           \STATE $\mathcal D=\mathcal D\cup(x,f(x))$
       \ENDFOR
   
       \STATE $\phi \gets \phi - \eta \cdot \nabla_\phi$
       \STATE $\theta \gets \theta - \eta \cdot \nabla_\theta$
       \STATE $\gamma \gets \gamma - \eta \cdot \nabla_\gamma$
   \ENDFOR
\end{algorithmic}
\end{algorithm}
\end{minipage}

\begin{table*}[!tb]
    \caption{Our model compared to state-of-the-art methods on \textbf{dSprites} dataset according to MSE between desired properties and measured properties of generated data.}
    \label{result-dsprites}
    \small
    \begin{tabular}{l *{10}{S}}
    \toprule
    \multirow{2}{*}{Method} & \multicolumn{5}{c}{In-distribution} & \multicolumn{5}{c}{Out-of-distribution} \\
    \cmidrule(lr){2-6} \cmidrule(lr){7-11}
    & {\emph{Size}} & {\emph{x Pos}} & {\emph{y Pos}} & {\emph{Disetg1}} & {\emph{Disetg2}} 
    & {\emph{Size}} & {\emph{x Pos}} & {\emph{y Pos}} & {\emph{Disetg1}} & {\emph{Disetg2}} \\
    \midrule
    CondVAE & 10.32 & 33.49 & 31.98 & 8.46 & 101.82 & 37.45 & 60.41 & 86.79 & 9.44 & 5.56 \\
    CondVAE\(+\)\textbf{Ours} & 0.13 & 0.14 & 0.14 & 0.002 & 0.99 & 1.02 & 2.93 & 3.95 & 0.004 & 0.02\\[5pt]
    CSVAE & 13.44 & 31.79 & 39.19 & 7.45 & 629.99 & 7.24 & 26.81 & 239.21 & 6.26 & 55.27 \\
    CSVAE\(+\)\textbf{Ours} & 0.07 & 0.09 & 0.06 & 0.002 & 1.45 & 0.72 & 4.08 & 4.34 & 0.003 & 0.62\\[5pt]
    Semi-VAE & 10.99 & 25.67 & 29.70 & 5.99 & 121.54 & 59.24 & 49.85 & 16.49 & 5.64 & 3.89 \\
    Semi-VAE\(+\)\textbf{Ours} & 0.19 & 0.11 & 0.14 & 0.01 & 0.002 & 1.77 & 0.93 & 1.75 & 0.37 & 0.01\\[5pt]
    PCVAE & 10.14 & 43.14 & 36.93 & 3.67 & 297.43 & 0.80 & 37.89 & 316.16 & 2.60 & 18.65 \\
    PCVAE\(+\)\textbf{Ours} & 0.09 & 0.06 & 0.05 & 0.01 & 0.01 & 0.61 & 7.52 & 15.83 & 0.002 & 0.002\\
    \bottomrule
    \end{tabular}
\end{table*}

\begin{table*}[!tb]
    \centering
    \small
    \caption{Our model compared to state-of-the-art methods on \textbf{QM9} dataset according to MSE between desired properties and measured properties of generated data.}
    \label{result-qm9}
    \resizebox{\textwidth}{!}{%
    \begin{tabular}{lcccccccc}
    \toprule
    \multirow{2}{*}{Method} & \multicolumn{4}{c}{In-distribution} & \multicolumn{4}{c}{Out-of-distribution} \\
    \cmidrule(lr){2-5} \cmidrule(lr){6-9}
    & \emph{NumBonds} & \emph{MolWeight} & \emph{Disetg1} & \emph{Disetg2} 
    & \emph{NumBonds} & \emph{MolWeight} & \emph{Disetg1} & \emph{Disetg2} \\
    \midrule
    Semi-VAE & 22.01 & 442.24 & 2.00 & 7.70 & 14.68 & 160.81 & 1.70 & 5.64 \\ 
    Semi-VAE+\textbf{Ours} & 16.83 & 223.04 & 0.29 & 1.25 & 10.87 & 144.12 & 0.63 & 1.61\\[4pt]
    PCVAE & 20.57 & 585.46 & 1.81 & 651.19 & 17.78 & 273.39 & 2.08 & 234.02\\
    PCVAE+\textbf{Ours} & 18.09 & 547.45 & 0.49 & 1.13 & 8.03 & 222.77 & 0.33 & 2.01\\
    \bottomrule
    \end{tabular}
    }
\end{table*}

\begin{wrapfigure}{r}{0.5\textwidth}\vspace{-1.2cm}
    \begin{center}
        \hspace{-7mm}
        \includegraphics[width=0.55\textwidth]{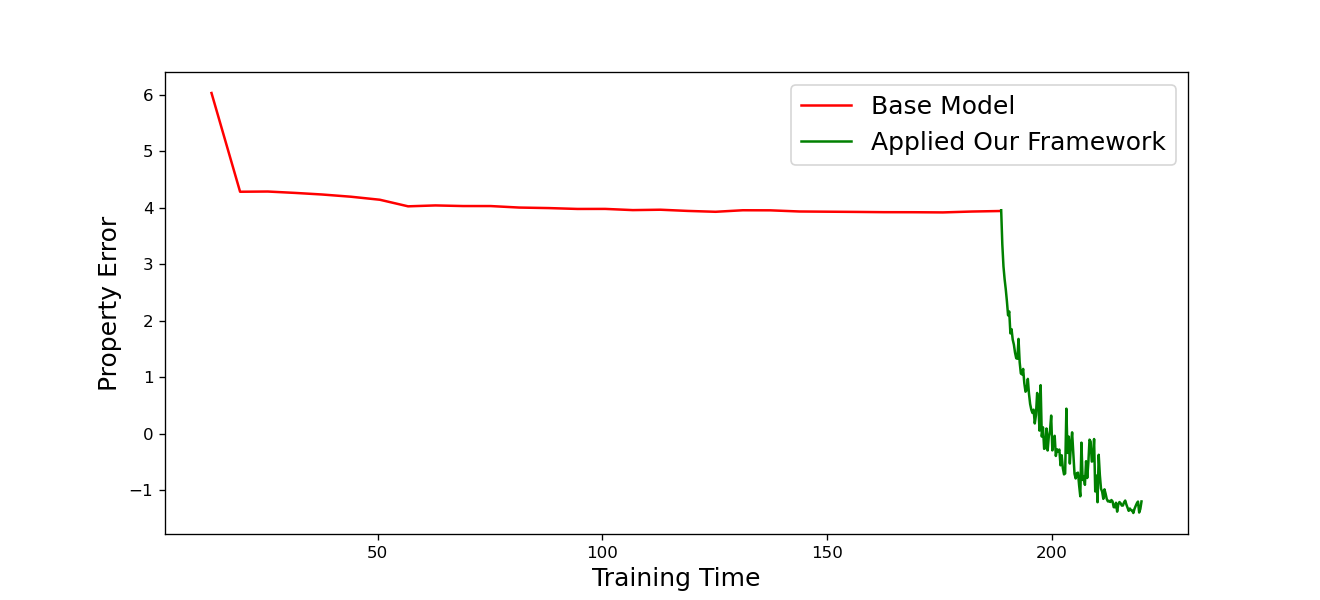}
    \end{center}\vspace{-0.4cm}
    \caption{Decrease curve of the logarithm of property error MSE with training time.\vspace{-0.4cm}}
    \label{fig:time}
\end{wrapfigure}


\textbf{Quantitative evaluation of training time.}
Additionally, a comparative evaluation of the framework's training time was conducted with the base models. The results are presented in Figure \ref{fig:time}. The findings demonstrate that utilizing our framework to optimize a pre-trained base model will incur a small additional training time compared to the time for training the base model from scratch.

\subsection{Qualitative Evaluation}


\begin{wrapfigure}{r}{0.5\textwidth}
    \vspace{-1.3cm}
    \begin{center}
        \includegraphics[width=0.5\textwidth]{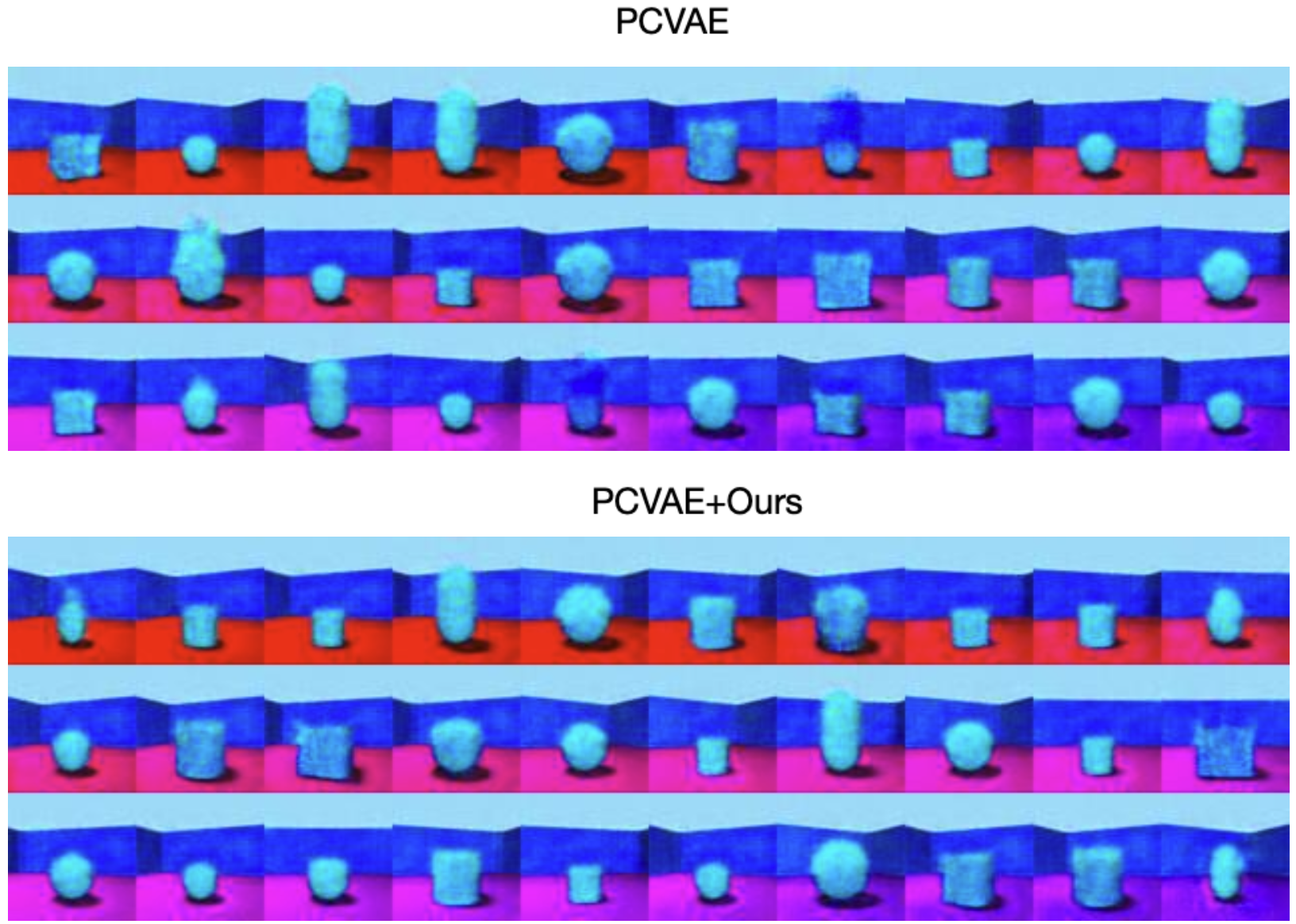}
    \end{center}\vspace{-0.2cm}
    \caption{Generated images in the OOD setting. The floor colors are discrete values in the dataset, our method can give better interpolations of colors compared to base generators. }
    \label{fig:ood}
\end{wrapfigure}

\textbf{Qualitative evaluation of OOD generation quality.} 
In Fig.~\ref{fig:ood}, we tested the OOD generation performance of our framework by interpolating between discrete floor colors in the training set. Results show that our framework can give smoother interpolation colors between the colors in the training set, demonstrating the OOD generation power and also showing the generation quality is not decreased.
\begin{wraptable}[6]{r}{0.53\textwidth}
    \centering
    \small
    \vspace{-2cm}
    \caption{Results of ablation study.}
    \begin{adjustbox}{width=0.5\textwidth}
    \begin{tabular}{p{2.7cm}p{0.7cm}p{0.7cm}p{0.7cm}p{0.7cm}p{0.7cm}}
    \toprule
    Methods & Size & x Pos & y Pos & Disetg1 & Disetg2 \\
    \midrule
    PCVAE & 10.14 & 43.14 & 36.93 & 3.67 & 297.4\\
    PCVAE-\textbf{Ours-1} & \textbf{0.03} & \textbf{0.02} & \textbf{0.03} & 2.80 & 310.27\\
    PCVAE-\textbf{Ours-2} & 10.08 & 25.32 & 30.74 & 3.07 & 238.8\\
    PCVAE-\textbf{Ours} & 0.09 & 0.06 & 0.05 & \textbf{0.01} & \textbf{0.01}\\
    \midrule
    PCVAE-OOD & 0.80&37.89&316.16&2.60&18.65\\
    PCVAE-\textbf{Ours-3}&1.27& 44.38&189.93&0.01&0.02 \\
    PCVAE-\textbf{Ours-OOD} & 	0.61&7.52&15.83&0.002&2e-5\\
    \bottomrule
    \end{tabular}
    \end{adjustbox}
    \label{ablation}
\end{wraptable}
\vspace{1mm}
\textbf{Qualitative evaluation of property control.}
As shown in Appendix \ref{append:gen-qm9}. We set the desired property as 7 bonds and 50 total atomic number. Compared to the base generator, our framework produced molecules with atomic numbers closer to 50. While Semi-VAE generated molecules with 8-11 bonds, our model created molecules with 6-8 bonds.
\subsection{Ablation Study}
To evaluate the effectiveness of the disentanglement term in our objective function and the iterative training approach, we performed an ablation study on three versions of our framework:
1) Ours-1: Removed the disentanglement terms during iterative training.
2) Ours-2: Eliminated the iterative training, applying the property control only to the base generator.
3) Ours-3: Retained both the iterative training and disentanglement terms but excluded the OOD component during iterative training. Specifically, we selected in-distribution values for generating additional data in iterative training and assessed it in an OOD context. (Results only compared with PCVAE-OOD and PCVAE-OOD-Ours.) 
This study used the dSprites dataset with PCVAE as the base generator. The results are shown in Table \ref{ablation}. The findings indicate that our model's enhanced property controllability is largely due to the iterative training approach, which directly boosts property controllability through back-propagation with property error. Excluding OOD additional data even deteriorates the MSE results for properties, as training with in-distribution data narrows the model's focus to the in-distribution range. This confirms the importance of the OOD component in our framework. 
\vspace{-2mm}
\section{Conclusion}
\vspace{-2mm}
In this paper, we attempt to tackle several challenges in property controllable data generation by proposing a general framework with its optimization methods. We proposed our framework to ensure the precision of both in-distribution and out-of-distribution property control as well as the disentanglement between properties to control and latent variables irrelevant to property control, with a procedure to effectively optimize our objective. Comprehensive experiments demonstrate the effectiveness of our proposed framework.

\bibliography{iclr2024_conference}
\bibliographystyle{iclr2024_conference}
\newpage
\appendix
\section{Examples of incorporating existing VAE-based controllable generators with our proposed framework.}\label{proof:base}

Here we show that our proposed framework can be applied to various VAE-based controllable generators, with Cond-VAE, SemiVAE, CSVAE, and PCVAE as examples. \\
For Cond-VAE, the extra assumption is $y=w$, thus an additional constraint is
\begin{equation}
C_1: y-w=0    
\end{equation}
For SemiVAE, the extra assumptions are $z\perp y$ and $y=w$, thus the additional constraints are 
\begin{equation}
\begin{aligned}
&C_1: p(z,y)-p(z)p(y)=0 \\
&C_2: y-w=0
\end{aligned}
\end{equation}
For CSVAE, the extra assumptions are $z\perp w$ and $(x|w) \perp y$, thus an additional constraint is
\begin{equation}
\begin{aligned}
    &C_1: p\left(w,z\right)-p(w)p(z)=0 \\
    &C_2: p\left((x|w),y)\right)-p(x|w)p(y)=0
\end{aligned}
\end{equation}
For PCVAE, the extra assumptions are $z\perp w$ and $x \perp y|w$, thus an additional constraint is
\begin{equation}
\begin{aligned}
    &C_1: p\left(w,z\right)-p(w)p(z)=0 \\
    &C_2: p\left((y|w),x)\right)-p(y|w)p(x)=0
\end{aligned}
\end{equation}

\section{Proof of Lemma \ref{lemma:variance}} \label{proof:var}
\begin{proof}
Here we use $\hat y$ to denote the desired properties and $y$ means the generated $y$, which is attained by $f(x)$.
\begin{equation*}
\begin{aligned}
&\text{Var}_z(\hat y|z, y)=0, \quad \forall y \\
\iff &\mathbb{E}_z(p( \hat y|z, y)-\mathbb{E}_z p(\hat y|z,  y))^2 = 0, \quad \forall  y \\
\iff &p(\hat y|z, y)-\mathbb{E}_z p(\hat y|z, y) = 0, \quad \forall  y \\
\iff & p(\hat y|z, y)-p(\hat y| y)=0, \quad \forall  y \\
\iff & \mathbb E_{p( y)} \left(p(\hat y|z, y)-p(\hat y| y)\right)=0 \\
\iff & p(\hat y|z)-p(\hat y)=0 \\
\iff & \hat y\perp z 
\end{aligned}
\end{equation*}
Above, we demonstrated that the independent constraint $\hat y\perp z$ is equivalent to the constraint $\text{Var}_z(\hat y|z, y)=0,\  \forall \hat y$. It's the same to prove $ y\perp \hat z$ is also equivalent to $\text{Var}_y(\hat z|y,  z)=0,\  \forall z$.

In our framework, $y$ and $z$ are sampled from the preferred properties distribution $p_2(y)$ and the prior distribution $p(z)$, and the $\hat y$ and $\hat z$ values of generated data are got by $\mathbf f(x)$ and $[\mathbf e_\phi(x)]_z$, correspondingly, where $[\cdot]_z$ denotes we extract $z$ from $[z,w]$. So we can get two derived constraints which are equivalent to $y \perp z$ as:
\begin{equation}
    \text{Var}_z(\mathbf{f}(x)|z, y)=0, \quad \forall y 
\end{equation}
\begin{equation}
    \text{Var}_y([\mathbf e_\phi(x)]_z|z, y)=0, \quad \forall z
\end{equation}
Thus the proof is established.

\end{proof}

\section{Dataset Details}\label{append:dataset}
(1) The \textbf{dSprites} dataset \cite{dsprites17} contains 170,000 2D shapes generated using ground-truth independent semantic factors. The properties used in this experiment are the scale and the x and y positions (designated as $x_{pos}$ and $y_{pos}$). Their ranges in dataset are $[0.5, 1], [0,1],[0,1]$. We set the out-of-distribution ranges as $[0.3, 0.5], [-0.2,0],[1,1.2]$. The split of the dataset is 160,000 images for training and 10,000 images for testing. The number of white pixels in images, average $x$ position and average $y$ position are used to estimate the selected properties. 

(2) The \textbf{Shapes3D} dataset \cite{3dshapes18} contains 480,000 3D shapes generated using ground-truth independent semantic factors. The properties used are wall hue, object hue, and floor hue. Their values range from 0 to 1 with each step 0.1. We set the out-of-distribution ranges as continuous float numbers in the range (0, 1). We segment the images into 4 colors and use the average hue in each segmented zone to estimate the properties. The split of the dataset is 390,000 images for training and 90,000 images for testing.

(3) The \textbf{QM9} dataset \cite{ramakrishnan2014quantum, ruddigkeit2012enumeration} consists of approximately 134,000 stable small organic molecules with up to 9 atoms. Properties to control include the number of bonds and the molecule weight of a molecule. Their ranges are integers in (0, 13) and (12, 144) in the dataset. Due to the constraint of the validity of molecules, we set their out-of-distribution ranges as (15, 15) and (60, 160) with overlap to the range of properties in the dataset. We use the generated adjacency matrix and an atom mass dictionary to calculate the estimated properties. The split is 121,000 for training and 13,000 for testing.
\section{Generation Quality on QM9 Dataset} 
\label{generation-quality-molecule-qm9}

Shown in Table. \ref{table:generation-quality-molecule-qm9}.

\begin{table*}[!tb]
    \centering
    \caption{Our framework compared to state-of-the-art methods for generation quality on \textbf{QM9} dataset.}
    \label{table:generation-quality-molecule-qm9}
    \vskip 0.15in
    \resizebox{0.82\textwidth}{!}{%
    \begin{tabular}{lcccccc}
    \toprule
    \multirow{2}{*}{Method} & \multicolumn{3}{c}{In-distribution} & 
    \multicolumn{3}{c}{Out-of-distribution} \\
    & \emph{Validity} & \emph{Novelty} & \emph{Uniqueness} & \emph{Validity} & \emph{Novelty} & \emph{Uniqueness} \\
    \midrule
    Semi-VAE &  100\% & 84.71\% & 94.44\% & 100\% & 94.67\% & 99.63\% \\
    Semi-VAE\(+\)\textbf{Ours} & 100\% & 88.11\% & 86.75\% & 100\% & 97.19\% & 89.00\% \\[5pt]
    PCVAE & 100\% & 98.38\% & 96.50\% & 100\% & 98.53\% & 89.06\% \\
    PCVAE\(+\)\textbf{Ours} & 100\% & 87.27\% & 61.88\% & 100\% & 98.00\% & 87.13\% \\
    \bottomrule
    \end{tabular}
    }
\end{table*}

\section{Property Error on 3DShapes dataset} \label{append:3d}
See Table~\ref{result-3dshapes}.
\section{Property Error on QM9 dataset}\label{append:qm9}
See Table~\ref{result-qm9}.
\begin{table*}[!tb]
    \centering
    \small
    \caption{Our model compared to state-of-the-art methods on \textbf{Shapes3D} dataset according to MSE between desired properties and measured properties of generated data.}
    \label{result-3dshapes}
    \resizebox{\textwidth}{!}{%
    \begin{tabular}{lcccccccccc}
    \toprule
    \multirow{2}{*}{Method} & \multicolumn{5}{c}{In-distribution} &  
    \multicolumn{5}{c}{Out-of-distribution} \\
    & \emph{Wall} & \emph{Item} & \emph{Floor} & \emph{Disetg1} & \emph{Disetg2} 
    & \emph{Wall} & \emph{Item} & \emph{Floor} & \emph{Disetg1} & \emph{Disetg2} \\
    \midrule
    CondVAE & 73.19 & 54.54 & 74.42 & 73.84 & 4.34 & 55.87 & 62.09 & 52.84 & 75.62 & 4.76 \\ 
    CondVAE+\textbf{Ours} & 19.79 & 16.18 & 14.26 & 3.56 & 0.58 & 17.60 & 14.64 & 12.32 & 8.24 & 0.62\\[4pt]
    CSVAE & 69.56 & 79.45 & 69.14 & 0.77 & 7.43 & 65.65 & 76.31 & 71.31 & 0.88 & 7.46 \\
    CSVAE+\textbf{Ours} & 28.56 & 27.78 & 27.44 & 0.01 & 0.01 & 28.96 & 26.27 & 28.42 & 0.01 & 0.001\\[4pt]
    Semi-VAE & 41.82 & 43.52 & 55.04 & 43.83 & 7.45 & 49.05 & 38.58 & 59.17 & 44.92 & 7.46 \\
    Semi-VAE+\textbf{Ours} & 7.98 & 12.28 & 13.60 & 0.02 & 0.57 & 14.04 & 12.40 & 12.52 & 0.001 & 0.67\\[4pt]
    PCVAE  & 34.34 & 47.04 & 34.24 & 42.28 & 0.85 & 37.94 & 50.51 & 43.16 & 37.51 & 99.75 \\
    PCVAE+\textbf{Ours} & 9.78 & 16.60 & 13.94 & 0.04 & 0.44 & 10.23 & 13.96 & 15.56 & 0.01 & 0.51 \\
    \bottomrule
    \end{tabular}
    }
\end{table*}

\section{Quantitative results of generation quality on 3DShapes dataset} \label{append:tradeoff}
\begin{figure*}
    \begin{center}
        \begin{subfigure}{0.47\textwidth}
         \centering
         \includegraphics[width=\textwidth]{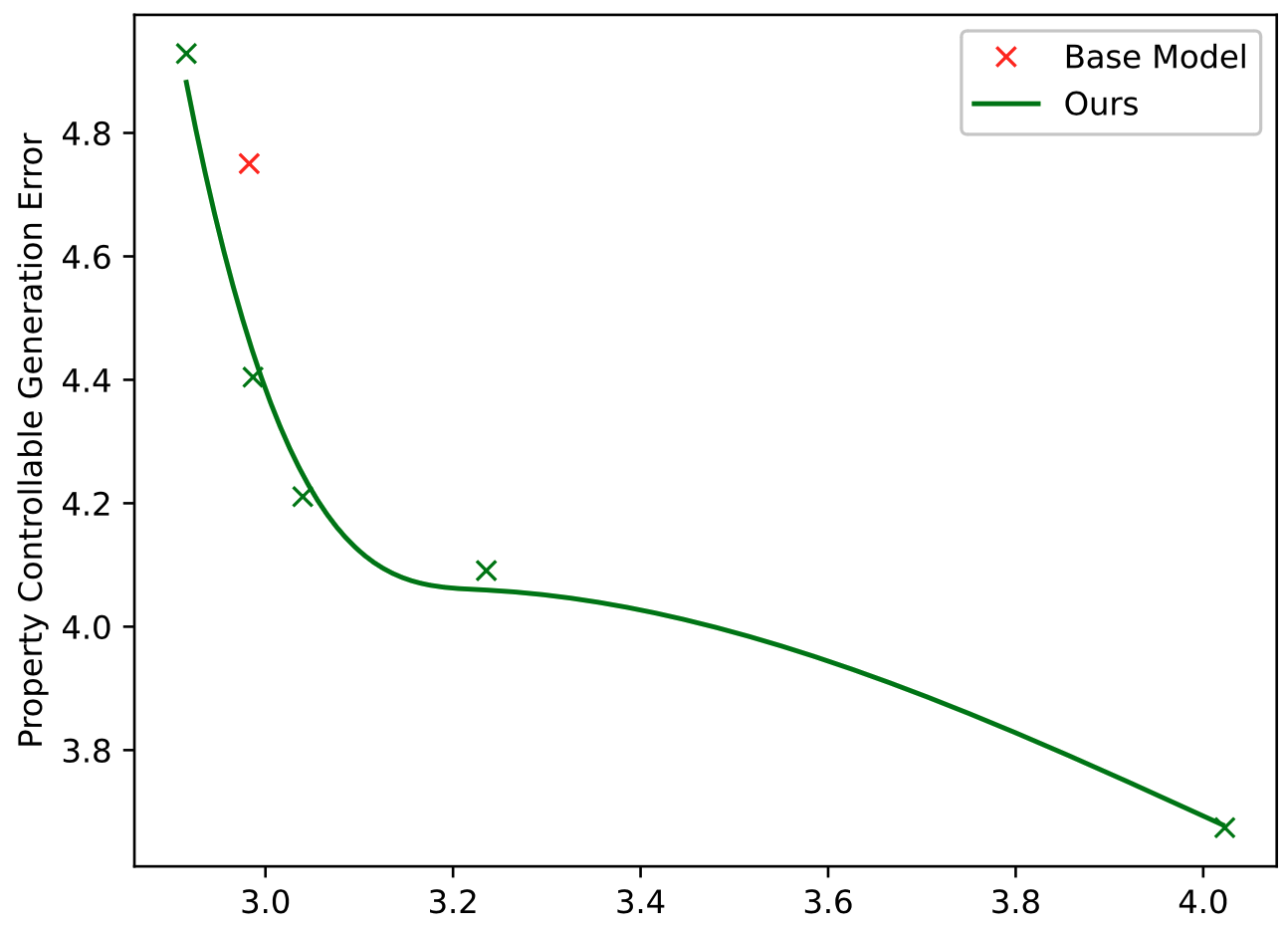}\vspace{-0.1cm}
         \caption{Reconstruction Error\vspace{-0.4cm}}
     \end{subfigure}
     \hfill
     \begin{subfigure}{0.47\textwidth}
         \centering
         \includegraphics[width=\textwidth]{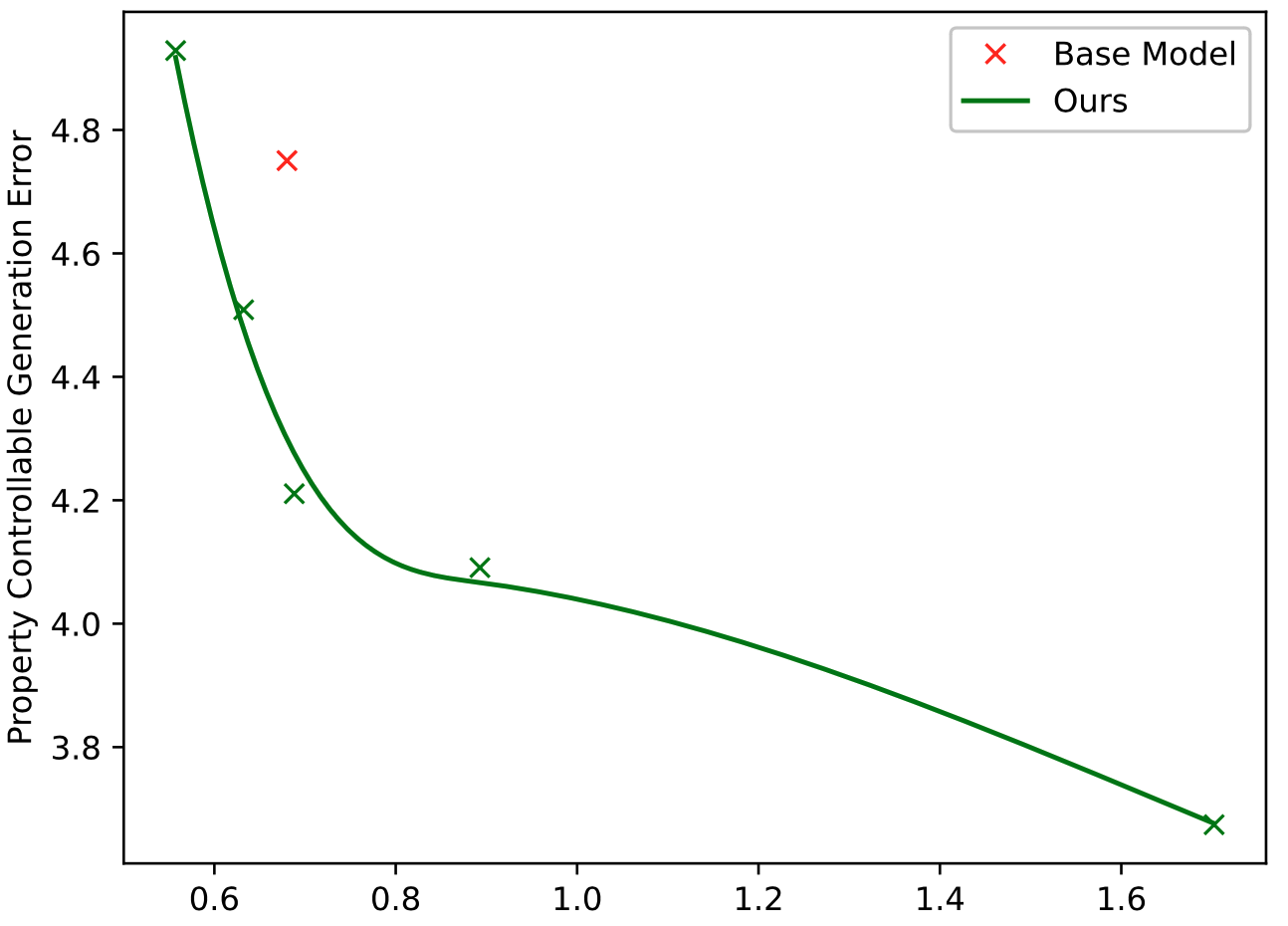}\vspace{-0.1cm}
         \caption{Negative log probability\vspace{-0.4cm}}
     \end{subfigure}
    \end{center}
    \caption{The joint comparison of generation quality and property control error. The x-axis’ are the reconstruction error and the negative log probability, and the y-axis is the property control error. Red x’s make the base generator's performance, and the green curve shows our framework's performance when adjusting the weights of the objective function. The curve demonstrates the trade-off between property controllability and generation quality. Our model yields a curve below the point given by the base model, which indicates that our model can improve both property control and generation quality at the same time by adjusting the weight of loss terms.}
    \label{fig:recon}
\end{figure*}

\section{Visualization of Generated Images}\label{append:visual-fig}
\begin{figure}
    \centering
     \begin{subfigure}{0.49\textwidth}
        \centering
         \includegraphics[width=\textwidth]{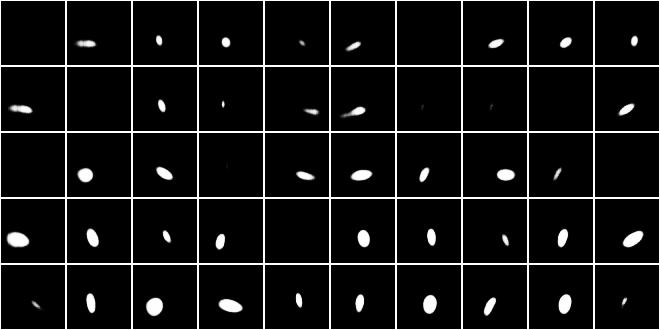}
         \caption{Out-of-distribution generation results of PCVAE\newline}
    \end{subfigure}
    \hfill
    \begin{subfigure}{0.49\textwidth}
        \centering
         \includegraphics[width=\textwidth]{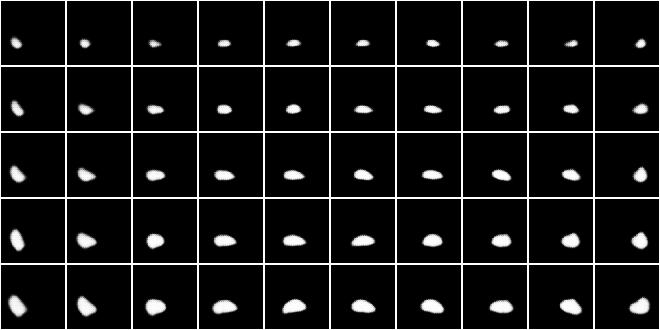}
         \caption{Out-of-distribution generation results of PCVAE applied with our framework}
    \end{subfigure}
    \caption{The comparison of controllable generated images between the base model and our framework for Out-of-distribution property values}
    \label{fig:gen-INT}
\end{figure}
Shown in Fig. \ref{fig:gen-INT}

\section{Visualization of Generated Molecules}
Shown in Fig. \ref{fig:gen-qm9}
\label{append:gen-qm9}
\begin{figure}
    \centering
     \begin{subfigure}{0.47\textwidth}
        \centering
         \includegraphics[width=1\textwidth]{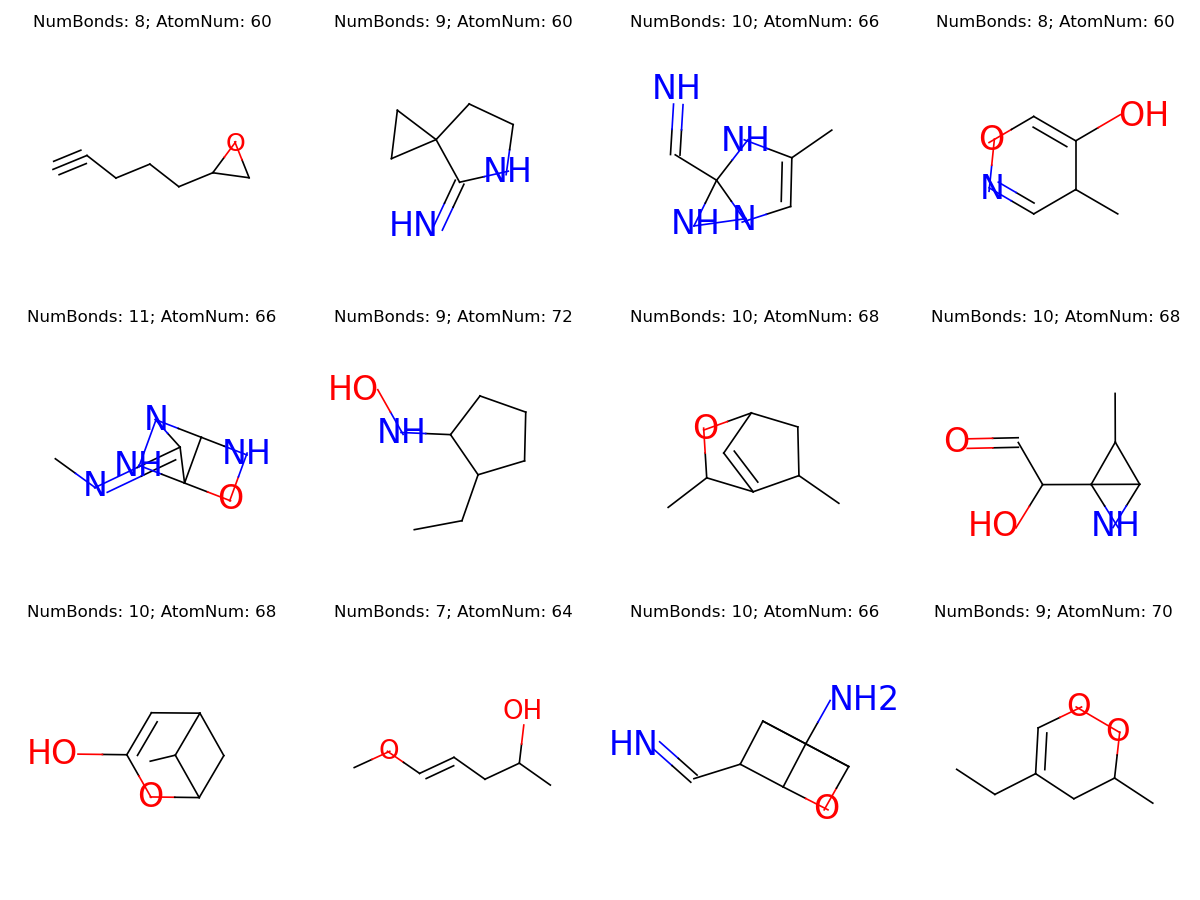}
         \caption{Generation results of Semi-VAE}
    \end{subfigure}
    \begin{subfigure}{0.47\textwidth}
        \centering
         \includegraphics[width=1\textwidth]{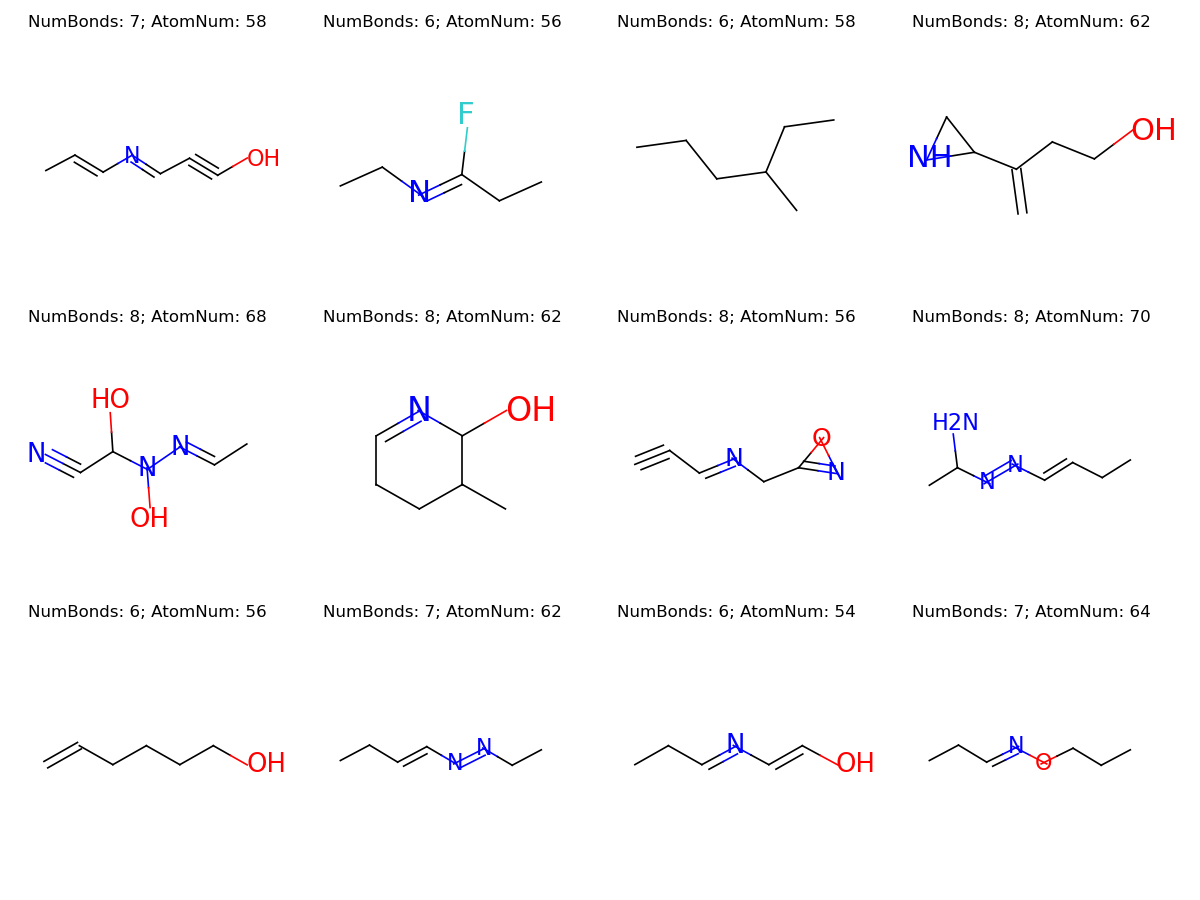}     \caption{Generation results of Semi-VAE applied with our framework}
    \end{subfigure}
    \caption{The comparison of controllable generated molecules between the base model and our framework for In-Distribution property ranges. The desired property is 7 bonds and 50 total atomic number.}
    \label{fig:gen-qm9}
\end{figure}

\end{document}